\newtheorem{theorem}{Theorem}
\tikzset{>=latex}
\pgfplotsset{compat=1.17}
\newcommand{\y}[1]{%
  \ifcase#1
    0
  \or
    -0.9
  \or
    -1.9
  \or
    -2.8
  \or
    -4.8
  \or
    -5.7
  \else
    -6.7
  \fi
}
\newcommand{\x}[1]{%
  \ifcase#1
    0
  \or
    1.5
  \or
    2.65
  \or
    3.5
  \or
    4.8
  \or
    6.4
  \else
    6
  \fi
}
\newcommand{\offset}{0.05}
\definecolor{green2}{RGB}{0, 163, 8}
\definecolor{lightblue}{RGB}{200, 255, 255}
\definecolor{lightgreen}{RGB}{200, 255, 200}
\definecolor{darkgreen}{RGB}{0, 130, 0}
\definecolor{lightred}{RGB}{255, 180, 180}
\definecolor{lightpurple}{RGB}{255, 180, 255}
\newcommand{\customplot}[6]{
    \addplot[#1] table [col sep=comma, x=step, y=#2] {#5};
    \addlegendentry{#6}
    \addplot[#1, opacity=0.0, name path=#2_lower, forget plot] table [col sep=comma, x=step, y=#3] {#5};
    \addplot[#1, opacity=0.0, name path=#2_upper, forget plot] table [col sep=comma, x=step, y=#4] {#5};
    \addplot[#1, opacity=0.3, forget plot] fill between[of=#2_lower and #2_upper];
}
\newcommand{\specialcomment}[1]{\hfill$\triangleright$\textit{#1}}
\renewcommand{\ss}[0]{\mathcal{S}}
\newcommand{\as}[0]{\mathcal{A}}
\begin{document}

\twocolumn[
\icmltitle{Zero-Shot Reinforcement Learning via Function Encoders}

\begin{icmlauthorlist}
    \icmlauthor{Tyler Ingebrand}{ut}
    \icmlauthor{Amy Zhang}{ut}
    \icmlauthor{Ufuk Topcu}{ut}
\end{icmlauthorlist}

\icmlaffiliation{ut}{University of Texas at Austin}
\icmlcorrespondingauthor{Tyler Ingebrand}{tyleringebrand@utexas.edu}
\icmlkeywords{Machine Learning, Reinforcement Learning, Representations, Linear}
\vskip 0.15in
]
\printAffiliationsAndNotice{}

\begin{abstract}

Although reinforcement learning (RL) can solve many challenging sequential decision making problems, achieving \textit{zero-shot} transfer across related tasks remains a challenge. 
The difficulty lies in finding a good representation for the current task so that the agent understands how it relates to previously seen tasks.
To achieve zero-shot transfer, we introduce the \textit{function encoder}, a representation learning algorithm which represents a function as a weighted combination of learned, non-linear basis functions. 
By using a function encoder to represent the reward function or the transition function, the agent has information on how the current task relates to previously seen tasks via a coherent vector representation. 
Thus, the agent is able to achieve transfer between related tasks at run time with no additional training. 
We demonstrate state-of-the-art data efficiency,
asymptotic performance, and training stability in three RL fields by augmenting basic RL algorithms with a function encoder task representation.
\end{abstract}

\section{Introduction}
\let\thefootnote\relax\footnotetext{Code: \scriptsize{\href{https://github.com/tyler-ingebrand/FunctionEncoderRL}{https://github.com/tyler-ingebrand/FunctionEncoderRL}}}

While deep reinforcement learning (RL) has demonstrated the ability to solve challenging sequential decision making problems, many real-life applications require the ability to solve a continuum of related tasks, where each task has a fixed objective and dynamics function.
For example, an autonomous robot operating in a kitchen needs the ability to achieve various cooking and cleaning objectives, each of which has a separate reward function. 
Likewise, if the robot is operating outside during winter, it must be able to operate in various slippery conditions, each of which has a separate transition function. 
However, it is not possible to learn a policy using standard RL algorithms for every possible task because there are conceivably infinite variations of reward and transition functions for a given system.

A key desiderata for learning systems is \textit{zero-shot} transfer, the ability to solve any problem from the task continuum at run time with no additional training. 
Zero-shot transfer would allow the robot to solve all of its kitchen objectives without retraining by reusing information from similar tasks. 
Likewise, the robot would be able to walk on a slippery surface by slightly modifying policies capable of walking on similar surfaces.
In order for an autonomous robot to achieve zero-shot transfer, it must know which reward function it should be optimizing and the properties of its current transition function. 
In other words, the autonomous system needs an informative task description that uniquely identifies the current task and describes how an unseen task relates to prior tasks. 

Prior works in zero-shot RL identify the task through a context variable, which is either   given \cite{her} or calculated from data \cite{fb, contextualRL, context2}. 
Such context variables are often domain specific \cite{usfa, hipmdplatent} and lack out-of-distribution guarantees with respect to related but unseen tasks. 
In contrast, we seek an algorithm that is applicable to many domains, and a context representation that can provably generalize to unseen but related contexts.  
Recent works have also described the task via natural language, and trained a policy through imitation learning \cite{rt, rt2}. However, this approach requires an enormous amount of data which is impractical for most use cases.

\begin{figure}           
    \centering
    \resizebox{\linewidth} {!} {%
    \includegraphics[]{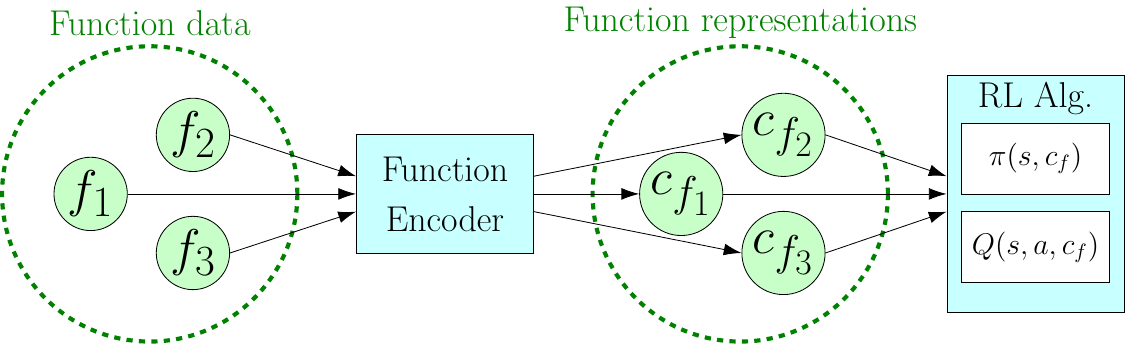}
    }

    \caption{A diagram representing the workflow of function encoders. 
    The set of functions is converted into a set of representations via a function encoder. 
    Those representations are passed into the RL algorithm as input to the policy and value functions. The represented functions can be reward functions and/or transition functions, depending on the setting.}
    \label{workflow}
    \vspace{-10pt}

\end{figure}

In this paper, we introduce the \textit{function encoder}, a representation learning algorithm which can be seamlessly combined with any RL algorithm to achieve zero-shot transfer in sequential decision making domains. 
Our algorithm first learns a set of non-linear basis functions which span the space of tasks, where a given task is represented by a function. 
New tasks are described as a linear combination of these basis functions, thus identifying how the current task relates to previously seen tasks. 
Once we have found the basis function coefficients for a new task, we pass those coefficients as a context variable into the policy.
This allows the policy to transfer to new tasks because similar tasks often have similar optimal policies, and the function encoder represents similar tasks with similar coefficients by design.
Thus, a basic RL algorithm which is augmented with a function encoder task representation as an additional input is able to adapt its policy to the given task. 

To demonstrate the broad applicability of our approach, we perform a diverse set of experiments in hidden-parameter system identification, multi-agent RL, and multi-task RL.
In a challenging hidden-parameter version of the Half-Cheetah environment, our approach shows a 37.5\% decrease in the mean square error of dynamics prediction relative to a transformer baseline. 
In a multi-agent tag environment, our approach shows significantly better asymptotic performance than comparable baselines, while matching the performance of a one-hot encoding oracle. 
In a multi-task version of Ms. Pacman, our approach shows a 20\% higher success rate relative to multi-task algorithms and better data efficiency than a transformer baseline. 

Additional qualitative analysis shows the similarity between learned task representations directly reflects the similarity between the tasks themselves.  
For example, Half-Cheetah environments with similar hidden variables will have similar representations (as measured by cosine similarity), while environments with large differences will have dissimilar representations. 
Our representation learning algorithm successfully encapsulates the relationships between tasks, allowing basic RL algorithms to achieve zero-shot transfer. 

\paragraph{Contributions}
\begin{itemize}
    \item We introduce a novel, general-purpose representation learning algorithm which finds representations for every function in a space of functions.
    \item We show that the algorithm achieves state-of-the-art performance in a supervised learning setting despite being computationally simple. 
    \item We demonstrate that the learned representations are \textit{widely applicable} and can be combined with any RL algorithm for zero-shot RL.
\end{itemize}

\section{Related Works}
\paragraph{Zero-shot RL}
There are three typical approaches to zero-shot RL, where each episode is modeled as a related but unique Markov decision process (MDP). We define context as the information needed to adapt a policy to the current episode's underlying MDP. In some works, the context is known \cite{her}. We consider the case where the context is  unknown but it may be implicitly described by data. 

The first approach is to find a policy which maximizes the worst-case performance under any context. This approach is required when there is no data to identify the current episode's context. Robust RL \cite{robust} and most multi-agent RL algorithms \cite{starcraft, selfplay} follow this approach. Many of these algorithms train a RL algorithm on numerous contexts simultaneously, such as an agent playing against an adversary randomly drawn from a league of adversaries. 

The second approach is to compute a context representation from data, and adapt the policy via the context representation. Many works in multi-task RL and hidden-parameter RL take this approach \cite{ hipmdps, contextualRL, fb, usfa, sf, pearl, hipmdp_latent}. Prior works lack guarantees about how representations will transfer to related but unseen tasks. In contrast, our approach guarantees a good representation for unseen tasks so long as they are a linear combination of the learned basis functions. 

The third approach is to directly include data on the current episode or task as input to the policy, often through a sophisticated architecture like a transformer \cite{transformerMetaRL, rl2, dt, rt}. 
This strategy is motivated by the fact that transformers have proven to be effective in natural language processing \cite{gpt, bert} and in sequential decision making problems \cite{rt, dt}, where a large amount of data is processed simultaneously. 
However, transformers have increased costs with respect to memory usage, training time, data efficiency, and training stability compared to other approaches \cite{traintrans, traintrans2}.
All transformer baselines in our experiments fall into this category. 
 
\paragraph{Basis Functions}
Prior works, such as the Fourier series or Taylor series, describe basis functions which can approximate functions with arbitrary precision.
The corresponding coefficients can in principle be used as representations for functions. 
However, these analytical approaches suffer from the curse of dimensionality and perform poorly on high-dimensional function spaces.
Additionally, high-dimensional function spaces, such as images or sensor data, are theorized to occupy low-dimensional manifolds \cite{manifold}.
Learned basis functions can fit only this manifold without representing every possible function in that space. 
In other words, learned basis functions may better fit the data with a relatively small number of basis functions compared to analytical approaches.

There are prior works from transfer learning which investigate learned basis functions. One approach specifies basis functions in the same form as a function encoder but computes the coefficients via a deep neural network \cite{learnedbasis}. In contrast, the function encoder computes the coefficients through the inner product, which is computationally efficient and ensures the function encoder is a linear operator. Another work learns basis functions as a space of features for classification problems \cite{protonetwork}. Our work involves similar basis function design, but is applicable to regression problems. 
This is motivated by the continuous nature of our setting where classification algorithms are ill-suited.

\section{Preliminaries}
We denote the reals as $\mathbb{R}$ and expectation as $\mathbb{E}\left[\cdot\right]$. 
Calligraphic characters such as $\mathcal{R}$ indicate sets. 

A Markov decision process (MDP) $m$ is a tuple $(\ss, \as, T, R)$ where $\ss$ is the state space, $\as$ is the action space, $T: \ss \times \as \mapsto \ss$ is the transition function and $R: \ss \times \as \mapsto \mathbb{R}$ is the reward function. 
The initial state at time zero is sampled from a set of initial states $\ss_0 \subseteq \ss$, and the next state is determined by the transition function. 
The agent receives reward according to the reward function \cite{suttonbarto}. 

The objective for the agent is to find the optimal policy $\pi^*: \ss \mapsto \as$ which maximizes $\mathbb{E} \left[ \sum_{t=0}^{\infty} \gamma^t R(s_t, a_t) \right]$, the expectation of accumulated discounted reward for future states according to some discount factor $\gamma$.  
The function $V^\pi(s_t) = \mathbb{E} \left[ \sum_{n=t}^{\infty} \gamma^n R(s_n, a_n) \right]$ with $a_t \sim \pi(s_t)$ is the state-value function for policy $\pi$, and the function $Q^\pi(s_t,a_t) = \mathbb{E} \left[ R(s_t,a_t) + \gamma V^\pi(s_{t+1})\right]$ is the state-action-value function \cite{suttonbarto}. 
There is a rich literature for how to find the optimal policy via RL \cite{dqn, ddpg, d4pg, ppo, sac}. 
In order to take advantage of these prior works, we will describe an algorithm which is generally applicable.

\section{The Function Encoder}
\subsection{Motivation}
Many fields of RL solve a modified MDP where each episode varies with respect to a function.
We define a function which varies every episode and affects the optimal policy as a \textit{perturbing function}.
The perturbing function view of RL is widely applicable. In multi-task RL, the reward function $r$ is sampled from a set of reward functions, and a change in reward function causes a change in the optimal policy. 
Thus, the reward function is a perturbing function in multi-task RL. 
In hidden-parameter RL, the transition function varies every episode due to the hidden parameters. 
A change in transition function affects the optimal policy and therefore the transition function is a perturbing function. 

In general, there is no closed form solution for either the optimal policy or the value function in terms of a perturbing function. 
A small change in the perturbing function can sometimes lead to abrupt and discontinuous changes in the optimal policy. 
However, it is often the case that a change in the perturbing function leads to a small, continuous change in the optimal policy. 
In other words, the relationship  between perturbing functions and optimal policies tends to be piece-wise continuous with sparse discontinuities.
See \ref{RewTranPol} for an example of how this arises even in simple settings. 
It is possible to contrive an example where this is not the case, but we do not observe this in practice. 

Since the perturbing function affects the optimal policy, the RL algorithm must have rich information on this function to calculate the optimal policy.
Therefore, we give the RL algorithm information on the perturbing function via a learned representation which is sufficient to distinguish between perturbing functions.
Thus, we represent every perturbing function in a space of  perturbing functions, where we are given some data to calculate this representation.  
Section \ref{sec:fe} describes how to find a representation for every  function in a space of perturbing functions. Section \ref{sec:zeroshot} describes how to use this representation for zero-shot RL. 

\subsection{Training a Function Encoder} \label{sec:fe}

Consider a set of functions $\mathcal{F} = \{f | f: \mathcal{X} \mapsto \mathbb{R}\}$ where the input space $\mathcal{X} \subset \mathbb{R}^n$ has finite volume.
This function set represents the set of perturbing functions.
Note that when there are $m>1$ output dimensions, we apply the same procedures $m$ times independently. 
Suppose $\mathcal{F}$ is a Hilbert space with the inner product $\langle f, g \rangle = \int_\mathcal{X} f(x)g(x)dx$, 
then there exists a set of $k$ orthonormal basis functions $\{g_1, g_2, ..., g_k\}$ such that for any $f \in \mathcal{F}$, 
\begin{equation}
    f(x) = \sum_{i=1}^k c_i g_i(x),\label{eq:1}
\end{equation}
where $c_i$ is the coefficient for basis function $i$ and $k$ may be infinite. \cite{functionalanalysis}. 
Given $f$, there is only a single sequence of coefficients that satisfy the equation due to the orthonormality of the basis functions. 
Given the coefficients, one can recover $f(x)$ via \eqref{eq:1}. 
Therefore, the coefficients are a unique representation of the function. 

Given the basis functions $\{g_1, g_2, ..., g_k\}$, we compute the coefficients as 
\begin{equation}
c_i = \langle f, g_i \rangle = \int_{\mathcal{X}} f(x) g_i(x) dx, \label{eq:2} 
\end{equation}

which comes from the definition of the inner product. See \ref{proof.eq2} for a derivation.
For high-dimensional $f$, this integral is intractable. 
However, given a dataset $D = \{(x_j, f(x_j))| j=1,2,...\}$, the coefficients are approximated using Monte-Carlo integration as 

\begin{equation}
c_i \approx \frac{V}{|D|} \sum_{x, f(x) \in D} f(x)g_i(x), \label{eq:3}
\end{equation}

where $V$ is the volume of the input space. 
Monte Carlo integration requires the input data points $\{x_j | j = 1,2,...\}$ to be uniformly distributed throughout the input space. If the data set is not uniformly distributed, it can be corrected with importance sampling \cite{montecarlo}. Additionally, as $|D|$ approaches infinity, the error in the approximation approaches zero \cite{montecarlo}.
Since the coefficients uniquely identify the function,  we find a unique representation for a function $f$ given basis functions $\{g_1, g_2, ..., g_k\}$ and data on the function $f$. 

\begin{algorithm}[b]
  \caption{Function Encoder}
  \label{alg:fe}
  \begin{algorithmic}[1]

    \STATE \textbf{Input:} Step size $\alpha$, set of data sets $D = \{\{(x_i, f_j(x_i)|i=1,2,..., I\}|j=1,2,..., J\}$
    \STATE \textbf{Output:} Basis functions $\{\hat{g}_1, \hat{g}_2, ..., \hat{g}_b\}$
    \STATE Initialize $\{\hat{g}_1, \hat{g}_2, ..., \hat{g}_b\}$ parameterized by $\theta$

    \WHILE{not converged}
        \STATE $loss = 0$
        \FOR{$\{(x_i, f_j(x_i)|i=1,2,...\}$ in $D$}
            \STATE $(\hat{c}_{f_j})_k = \frac{V}{I}\sum_{x_i,f_j(x_i)} f_j(x_i) \hat{g}_k(x_i) \;\; \forall k $ \specialcomment{Eq. 4}
            \STATE $\hat{f}_j(x_i) = \sum_{k=1}^b (\hat{c}_{f_j})_k \hat{g}_k(x_i)\;\;\; \forall i$ \specialcomment{Eq. 5}
            \STATE $loss \mathrel{+}= (\hat{f}_j(x_i) - f_j(x_i))^2 / I \;\;\; \forall i$ \specialcomment{Eq. 6\:}
        \ENDFOR
        \STATE $\theta = \theta - \alpha \nabla_\theta loss$
        
    \ENDWHILE
    \STATE \textbf{return} $\{\hat{g}_1, \hat{g}_2, ..., \hat{g}_b\}$
  \end{algorithmic}
\end{algorithm}

\begin{figure}[b!]
    \centering
    \resizebox{5.5cm} {!} {%
        \includegraphics{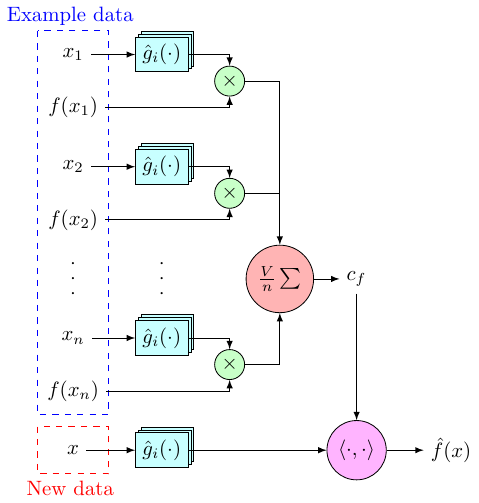}
        }
    \vspace{-10pt}
    \caption{A block diagram representing the flow of information in a function encoder. The top segment of the diagram shows how to use example data to compute the representation $c_f$. The bottom segment shows how to use $c_f$ to predict $\hat{f}(x)$ for a given input $x$. 
    }
    \label{diagram}
\end{figure}

This approximation is an important aspect of the function encoder. 
The representation for a function is calculated using a sample mean (scaled by $V$), which can be computed efficiently on a GPU. 
A large batch of data can be used to compute a representation in only milliseconds.
It is also possible to iteratively update this sample mean as new data arrives, such that a low-compute embedded system could calculate this representation in real-time with constant memory usage. 
Lastly, once we have computed the coefficients from data, we can compute $f$ via \eqref{eq:1} without any form of retraining. 
Thus, the function encoder is extremely useful for online settings.

The only remaining challenge is how to find the basis
functions for a space of unknown functions.
To do so, we first initialize $b$ basis function approximations $\{\hat{g}_1, \hat{g}_2, ..., \hat{g}_b\}$ using neural networks (or one multi-headed neural network). 
Initially, these basis functions neither span the function set nor capture any relevant information. 
Nonetheless, we compute the coefficients using a dataset $D$,
\begin{equation}
    \hat{c}_i = \frac{V}{|D|}\sum_{x,f(x) \in D} f(x) \hat{g}_i(x). \label{eq:4}
\end{equation}

Once we have computed the coefficients, we approximate $f$ using the basis function approximations,
\begin{equation}
    \hat{f}(x) = \sum_{i=1}^b \hat{c}_i \hat{g}_i(x). \label{eq:5}
\end{equation}

Lastly, we define a loss function for the function approximation, such as the mean squared error
\begin{equation}
   L(D) = \frac{1}{|D|}\sum_{x,f(x) \in D} (\hat{f}(x) - f(x))^2, \label{eq:6}  
\end{equation}

which is minimized via gradient descent. 
Following this process in a iterative fashion yields Algorithm \ref{alg:fe}. 
The result is a set of learned, non-linear basis functions which span the set of functions. 
We call the set of learned basis functions a \textit{function encoder}, since the basis functions encode any function $f \in \mathcal{F}$ into a vector representation $c_f = \{c_1, c_2, ..., c_b\}$. See Figure \ref{diagram} for a graphical representation of how example data is used to predict $\hat{f}(x)$.

\begin{theorem}The function encoder's mapping from functions to representations is a linear operator.
\end{theorem}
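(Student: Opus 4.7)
The plan is to unpack the definition of the encoder map $T:\mathcal{F}\to\mathbb{R}^b$ given by $T(f)=(c_1,\dots,c_b)$ with $c_i=\langle f,g_i\rangle$, and then appeal to the bilinearity of the Hilbert space inner product. Concretely, I would verify the two axioms of a linear operator: additivity $T(f+h)=T(f)+T(h)$ and homogeneity $T(\alpha f)=\alpha T(f)$ for any $f,h\in\mathcal{F}$ and any scalar $\alpha$.

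For the additivity step, I would fix a basis function $g_i$ and write, using equation~(\ref{eq:2}),
\begin{equation*}
c_i(f+h) = \langle f+h, g_i\rangle = \int_{\mathcal{X}}(f(x)+h(x))\,g_i(x)\,dx,
\end{equation*}
then split the integral into $\int_{\mathcal{X}}f(x)g_i(x)\,dx + \int_{\mathcal{X}}h(x)g_i(x)\,dx = c_i(f)+c_i(h)$. For homogeneity, a scalar $\alpha$ factors out of the integral, giving $c_i(\alpha f)=\alpha c_i(f)$. Since this holds componentwise for every $i\in\{1,\dots,b\}$, the vector-valued map $T$ inherits both properties, which is exactly the definition of linearity.

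As a brief remark, I would also note that the same argument applies to the empirical estimator in equation~(\ref{eq:3}): the Monte Carlo approximation $\hat{c}_i = \frac{V}{|D|}\sum_{x\in D} f(x)\hat{g}_i(x)$ is likewise linear in $f$ on any fixed dataset $D$ of input points, because a finite weighted sum evaluates linearly in its target values. This means the operator property is preserved not only in the idealized Hilbert-space formulation but also in the practical implementation used by Algorithm~\ref{alg:fe}.

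There is no real obstacle here; the statement is essentially a restatement of the linearity of the inner product in its first slot, a standard property of Hilbert spaces. The only thing worth being careful about is to state clearly that linearity is verified component-by-component in $\mathbb{R}^b$, so that one does not have to invoke any additional structure on the codomain beyond the standard vector space operations.
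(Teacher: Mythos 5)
Your proposal is correct and follows essentially the same route as the paper: both proofs reduce the statement to the linearity of the inner product $\langle \cdot, g_i\rangle$ in its first argument (the paper does it in one step with $f_3 = af_1 + bf_2$, you split it into additivity and homogeneity) and then note that the componentwise identity gives linearity of the vector-valued representation map. Your added remark that the Monte Carlo estimator in \eqref{eq:3} is also linear in $f$ for a fixed dataset is a nice practical observation but does not change the underlying argument.
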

\begin{proof}
Consider a function $f_3 = a f_1 + b f_2$ where $a \in \mathbb{R}$ and $b \in \mathbb{R}$. 
The $i$-th coefficient for function $f_3$ can be computed using \eqref{eq:2}: 
\[ (c_{f_3})_i = \langle f_3, g_i \rangle \]
\[ (c_{f_3})_i =\langle a f_1+ b f_2,  g_i \rangle \]
\[ (c_{f_3})_i = a  \langle f_1, g_i \rangle  + b \langle f_2, g_i \rangle \]
\[ (c_{f_3})_i = a (c_{f_1})_i + b (c_{f_2})_i\]
This is true for every basis function $g_i$, so therefore it is true for the vector representation. 
Thus, the linear relationship between functions is preserved as a linear relationship between representations, $c_{f_3} = a c_{f_1} + b c_{f_2}$.
\end{proof}

This implies if $f_3$ is not a function in the training dataset, but $f_1$ and $f_2$ are, then $f_3$ can be well represented. 
If the function encoder can represent every function in the training set, then it can also represent unseen functions so long as they are a linear combination of functions in the training set. 
Furthermore, this implies it possible to increase the dimensionality of the learned space by incorporating diverse training functions.
Thus, the function encoder yields unique representations with predictable and generalizable relationships. 

\subsection{Orthonormality}
This algorithm does not enforce orthonormality. Empirically, we observe that the basis functions converge towards orthonormality,  where an orthonormal basis spanning the function space has zero loss. See \ref{app:ortho} for a discussion.

\subsection{Zero-Shot RL via Function Encoders} \label{sec:zeroshot}

To achieve zero-shot transfer in a RL domain, we first encode the perturbing function using a function encoder. 
The representation uniquely identifies the perturbing function and its relationship to previously seen functions.
The representation is passed into a RL algorithm as an additional input, which yields a policy of the form $\pi(s, c_f)$ and value functions of the form $V^\pi(s,c_f)$ and $Q^\pi(s,a,c_f)$ where $c_f$ is the encoding of the perturbing function. 
Because policies and value functions are common components in all RL algorithms, this approach is widely applicable. 
Providing the representation allows the RL algorithm to successfully adapt its actions depending on the current episode's perturbing function, as we demonstrate in Section \ref{sec:experiments}.

\paragraph{A Key Assumption} Data on the perturbing function is needed to compute its representation. 
This data has the form of input-output pairs, but no further information is needed on the perturbing function neither during training nor execution. 
This also implies some exploration must be done each episode, to collect data, before exploitation can occur. 
This paper does not address the exploration problem and assumes access to  data on the perturbing function.

\begin{figure}[!b]
        \centering
        \includegraphics[width=1.0\linewidth]{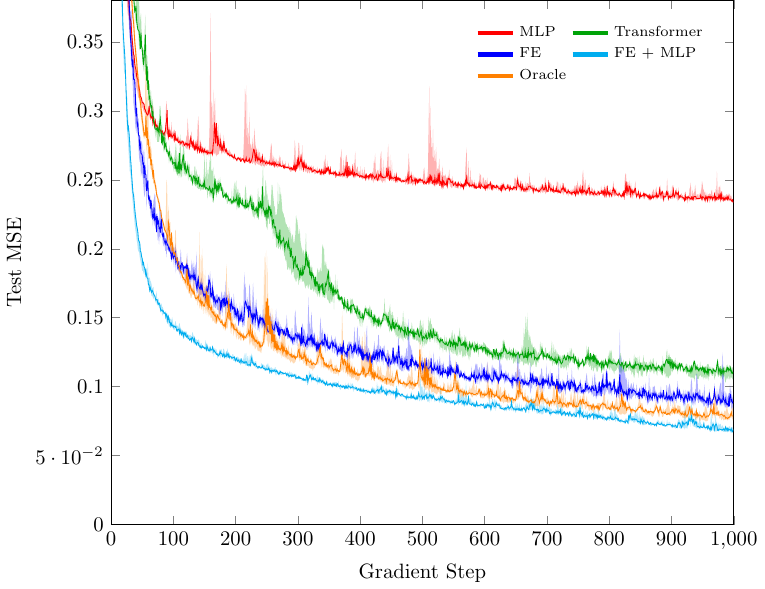}
        \vspace{-20pt}
        \caption{Comparison of MLPs, transformers, and function encoders on system identification of a hidden-parameter MDP. Each algorithm is run for three seeds, with the shaded areas representing minimum and maximum values. 
        }
        \label{sys_id_loss}
\end{figure}%

\section{Experiments} \label{sec:experiments}
To evaluate our approach, we first ensure that a function encoder can be accurately trained in a supervised setting. In Section \ref{sec:hipsysid}, we demonstrate faster convergence and better asymptotic performance, relative to a transformer baseline, on a supervised hidden-parameter system identification problem. Next, we evaluate the quality of the representations created by a function encoder. In Sections \ref{sec:ma} and \ref{sec:mt}, we demonstrate zero-shot RL by passing the representation of the perturbing function into the RL algorithm. In order for a policy to perform well in these settings, it requires rich information on the perturbing function, and thus the results indicate that the representations carry this rich information. See Appendix \ref{implementationdetails} for implementation details.
We use $b=100$ basis functions for all experiments.
See Appendix \ref{sec:ablations} for an ablation on how the hyper-parameters affect performance.

\begin{figure*}[!h]
    \centering
    \includegraphics[width=1.8\columnwidth]{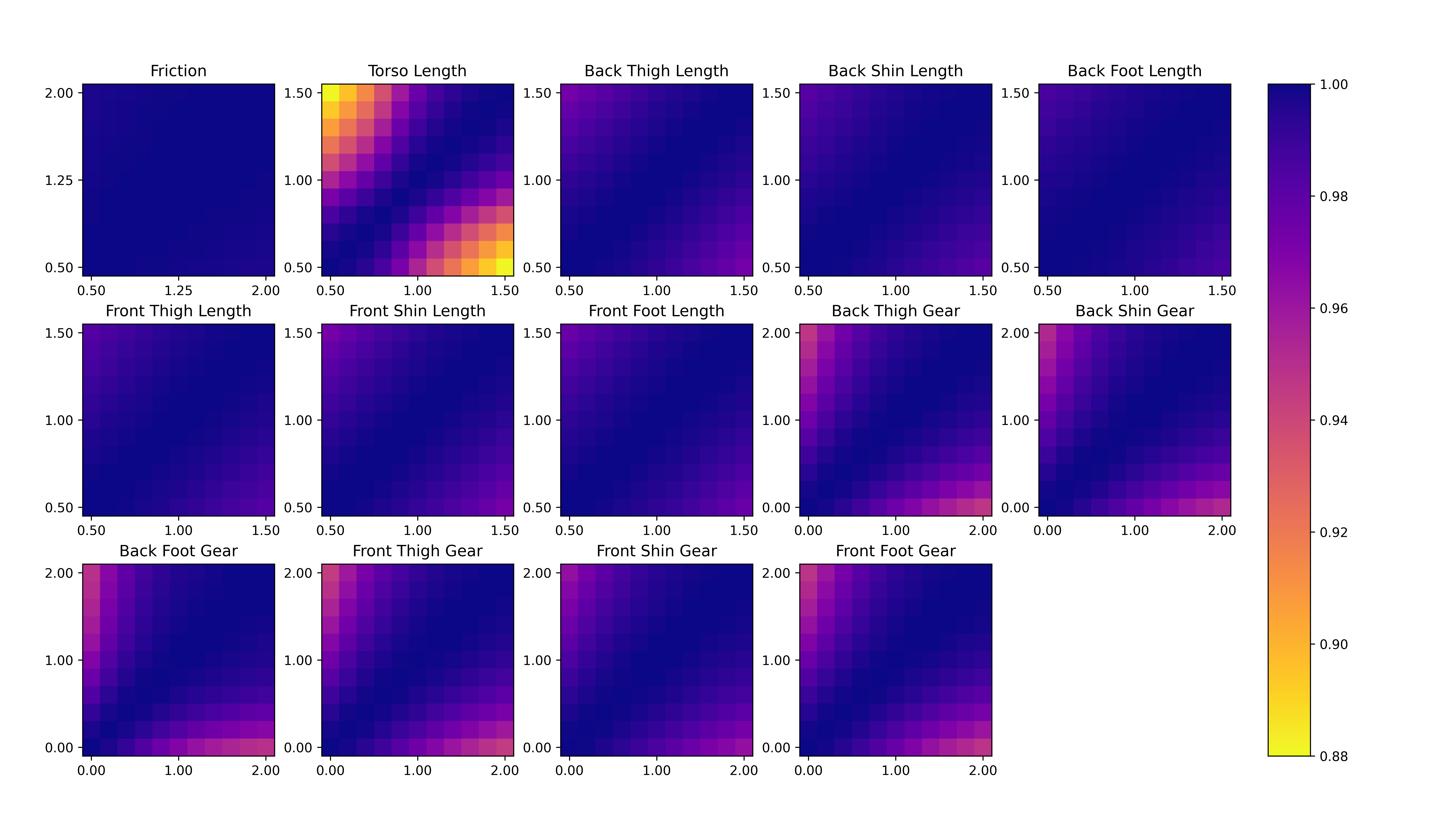}
    \vspace{-28 pt}
    \caption{A plot of cosine similarity between function encoder representations for hidden-parameter environments. 
    Axes show the hidden parameter value as a ratio of its default value in the Half-Cheetah environment. 
    This figure shows that the function encoder representations directly relate to the underlying hidden parameters in a consistent fashion, where an increasing change in a given hidden parameter leads to an increasing change in the representation. 
    }
    \label{sys_id_cos_sim}
    \vspace{-10pt}
\end{figure*}

\subsection{Hidden-Parameter System Identification} \label{sec:hipsysid}

Hidden-parameter MDPs differ from MDPs in that the transition function depends on an additional hidden parameter $\theta$. The hidden parameter $\theta$ varies every episode and is unknown to the agent. From the agent's perspective, the transition function is different every episode and this affects the optimal policy. 
Thus, the transition function is a perturbing function in this setting. 

We compare function encoders against two other deep learning baselines for system identification in hidden-parameter MDPs. 
The testing environment is a modified Half-Cheetah environment \cite{towers_gymnasium_2023} where the segment lengths, friction coefficient, and control authority are randomized within a range each episode, which leads to variance in the transition function. 
The goal is for the system identification algorithm to accurately predict transitions given $5,000$ example data points on previous transitions. The training dataset includes $200$ transition functions. Figure \ref{sys_id_loss} plots the results. 

\texttt{MLP} cannot incorporate example data, so its lowest MSE estimator would be to predict the average transition function in its data set. Its performance stalls because it is not possible to accurately predict the transition function without using information on the hidden parameters. 

\texttt{Transformer} can incorporate the example data by passing it as input into the encoder side of the transformer.
Unlike the function encoder, the transformer is memory inefficient so it is not able to use all of the example data. 
The state-action pair, for which we want to predict the next state, is input to the decoder side of the transformer. Note that transformers are computationally expensive, and suffer a $194\%$ increase in training time relative to \texttt{MLP}.

\texttt{FE} is able to use all example data by converting it into a function encoder representation. This representation can then be used to estimate the function, as shown in \eqref{eq:5}. We observe that the function encoder shows better performance relative to the two baselines, with a 19.7\% decrease in MSE relative to the transformer. Additionally, the function encoder is computationally efficient and only incurs a $5\%$ increase in training time relative to \texttt{MLP}. 

\texttt{FE + MLP} is an extension where the function is represented as $f(x) = \Bar{f}(x) + f_{dif}(x)$, where $\Bar{f}$ is the average transition function and $f_{dif}$ is the difference between the current function and the average function. $\Bar{f}(x)$ is a MLP trained via a typical gradient-based approach, whereas $f_{dif}(x)$ is a function encoder. This approach has better data efficiency than a standalone function encoder because the data is only needed to predict how the current function differs from the average function, which is an easier task than identifying the function itself. \texttt{FE + MLP} achieves a 37.5\% decrease in MSE relative to the transformer baseline. However, there is a gradient calculation for both $\Bar{f}(x)$  and $f_{dif}(x)$, which leads to a moderate $75\%$ increase in training time relative to \texttt{MLP}.
We would like to highlight that this approach achieves good performance with as little as $50$ data points. See \ref{sec:ablations}.

\texttt{Oracle} is a MLP baseline with access to the hidden parameters as a input variable. 
The oracle is an \textit{approximate} upper bound on the performance of an end-to-end system identification algorithm because it is provided with all of the information needed to accurately predict the dynamics. 

Function encoders also allow us to compare the representations across environments with different hidden parameters. 
An ideal representation algorithm would show a high cosine similarity between two environments with similar hidden parameters, and a low cosine similarity between two environments with divergent hidden parameters.
Figure \ref{sys_id_cos_sim} shows the cosine similarity between environments that vary along a single hidden parameter dimension for the Half-Cheetah experiment. 
We observe the desired relationship for the function encoder's representation, and we can additionally use the representation to study the environment. 
By analyzing which hidden parameters have the most effect on the representation, we can learn which hidden parameters have the most effect on the transition function itself since the representation directly corresponds to the transition function. 
The learned representation suggests  that torso length is the most influential factor on system dynamics, followed by control authority (gears).

\subsection[Multi-Agent Reinforcement Learning]{Multi-Agent Reinforcement Learning } \label{sec:ma}

Multi-agent RL models an environment where an adversary takes actions which affect the transitions and rewards. We assume the adversary's policy changes every episode, but remains fixed for a given episode. This assumption reflects an agent playing against a random opponent every episode. The adversary's policy should affect the agent's policy. For example, an adversary may take actions which can be exploited, and thus the optimal policy of the agent changes to exploit those weaknesses. Therefore, the adversary's policy is a perturbing function. 
At execution time, we assume access to $5{,}000$ data points on each adversary. 
The training dataset includes data on $10$ adversaries. 

We compare function encoders against three baselines for multi-agent RL in a partially observable game of tag \cite{terry2021pettingzoo}. One agent tries to maximize the distance between the two agents, while the other tries to minimize it. Furthermore, the agents' locations are not visible to each other, and so the tagger must guess where the runner is hiding. The ego agent plays against a random adversary each episode and the goal is to perform well against every adversary. 
We plot the results in Figure \ref{multi_agent_data}. 

\texttt{PPO} does not have access to any information on which agent it is playing against in the current episode. 
\texttt{PPO + OHE} gets access to a one-hot encoding of the index of its current adversary. Thus, it has access to information about which adversary it is playing against, but it cannot generalize these representations to new agents. 
\texttt{PPO + FE} uses adversary data to generate a representation of the adversary's policy, which is passed into the state-value function and the agent's policy. 
Furthermore, the function encoder can represent an unseen adversary via basis function coefficients, and so it could generalize to new adversaries with sufficient training.
Lastly, \texttt{PPO + Trans} uses this same adversary data as input to the encoder side of a transformer, while the environment state is passed to the decoder side of the transformer. 

Due to the partial observability of this environment, leveraging prior knowledge about a specific adversary is necessary to achieve the best possible performance. For example, if it is known that an adversary always moves to the same location, then this information can be exploited by the tagger. Both \texttt{PPO + FE} and \texttt{PPO + OHE} are capable of doing so as their policies have sufficient information to distinguish between adversaries, and consequently these two methods achieve the best performance. In contrast, \texttt{PPO} cannot distinguish between adversaries, and so its policy is suboptimal. \texttt{PPO + Trans} can theoretically distinguish between the adversaries by interpreting the adversary data. However, it converges much slower, suggesting that learning an optimal policy that incorporates adversary data directly is more challenging than learning from a principled representation.

\begin{figure}
        \centering
        \includegraphics[width=1.0\linewidth]{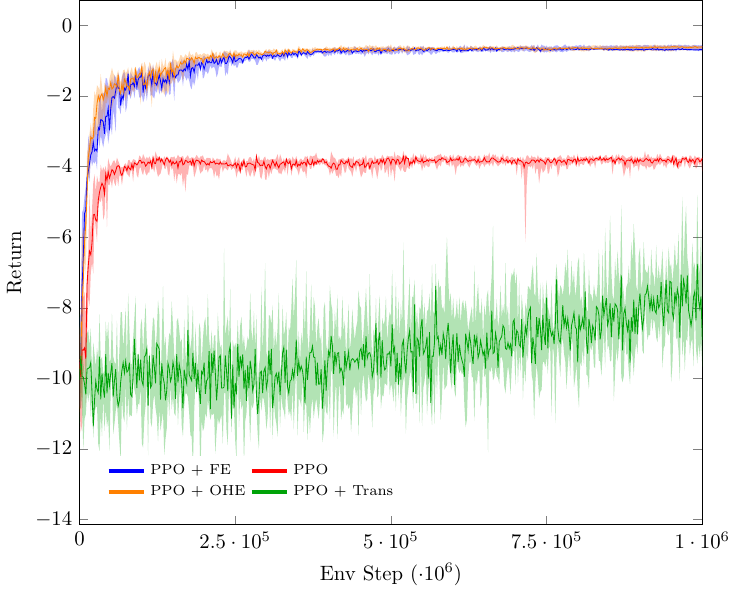}
        \vspace{-20pt}
        \caption{Training curves for four algorithms on a partially observable game of tag. The adversary is randomly sampled from a pre-trained league. Each algorithm is run for five seeds, with shaded areas indicating minimum and maximum values. }
        \label{multi_agent_data}
\end{figure}%

\begin{figure*}[!t]
    \centering
    \begin{subfigure}{0.43\textwidth}
    \includegraphics{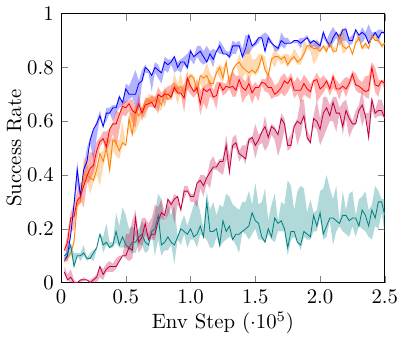}
    \end{subfigure}
    \hspace{-20pt}
    \begin{subfigure}{0.53\textwidth}
        \includegraphics{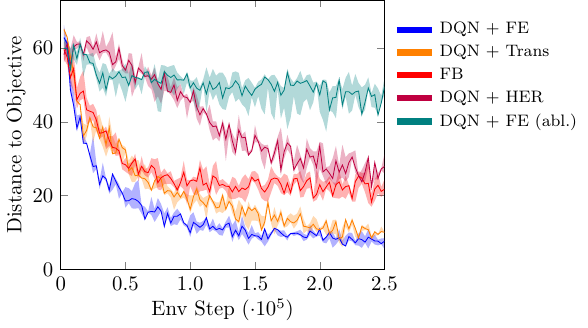}
    \end{subfigure}
    \vspace{-15pt}
    \caption{Comparison of forward-backward (FB) learning and various versions of DQN on the  Ms. Pacman environment. Left shows the fraction of episodes that terminate with Ms. Pacman at the goal location. Right shows the average distance to the goal location at the end of the episode. Shaded areas indicate the first and third quartiles over five seeds. }
    \label{multi_task_data}
\end{figure*}
\begin{figure*}[t]
    \centering
    \includegraphics[width=1.8\columnwidth]{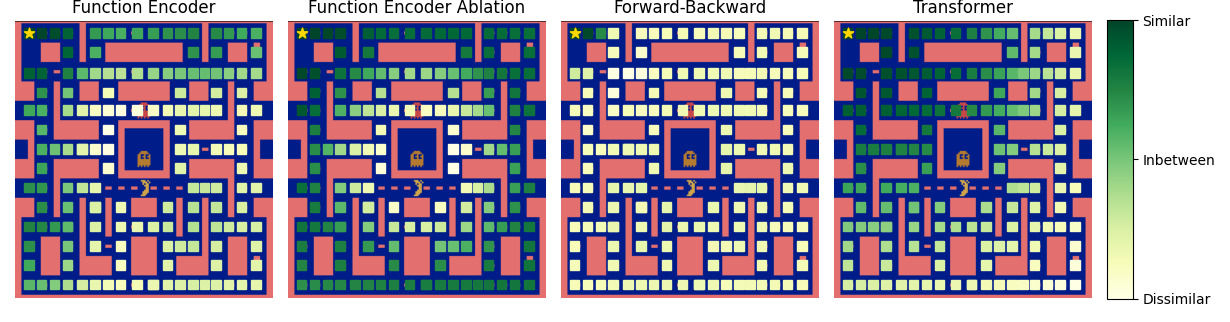}
    \vspace{-15pt}
    \caption{Cosine similarity between representations for reward functions of different goal locations. Similarity is shown between every goal location relative to the goal in the top left corner marked with a star. Cosine similarity scores are normalized for each algorithm.
    This figure shows that function encoders and transformers learn representations which maintain the relationships between goal locations.
    }
    \label{cos_sim_pacman}
\end{figure*}

\subsection{Multi-Task Reinforcement Learning} \label{sec:mt}

In multi-task RL, the reward function is sampled from a set of reward functions each episode. 
The sampled reward function affects the optimal policy. 
Therefore, the reward function is a perturbing function. 
In this section, we show that a function encoder can use data on the reward function to generate a representation. 
Then, this representation is passed into a RL algorithm to achieve zero-shot RL. 

We evaluate function encoders on a challenging multi-task version of Ms. Pacman \cite{fb}. 
The objective in this environment is to reach a goal location without being captured by a ghost.
However, the goal location is not given directly and instead the algorithm must infer the goal location from reward data. At execution time, we provide $10,000$ data points on the reward function to the algorithms.  The training set includes all reward functions. 

We compare against multi-task baselines, shown in Figure \ref{multi_task_data}. Forward-backward representation learning (\texttt{FB}) learns a representation of possible trajectories, and uses reward data to compute what the optimal trajectory should be \cite{fb}.
Note that \texttt{FB} uses no reward data during training, but uses the same reward information at execution time.
\texttt{DQN + Trans} uses the reward data as input to the encoder side and the current state as input to the decoder side of a transformer. 
\texttt{DQN + HER} is a multi-task algorithm which assumes access to the goal locations and the reward function \cite{her}.
\texttt{DQN + FE} uses reward data to compute a representation, which is then passed into the state-action-value function. 
\texttt{DQN + FE(abl.)} is an ablation where the representation $c_f$ is calculated according to \eqref{eq:4}, but instead of using \eqref{eq:5}, the reward function approximation is calculated as $\hat{r}(s,a) = \hat{r}_\theta(s,a, c_f)$, where $r_\theta$ is a MLP.
This representation is empirically not sufficient to accurately predict $r(s,a)$, and thus acts as a baseline with imperfect function representations. 

We observe that \texttt{DQN + FE} has better data efficiency and asymptotic performance compared to the other approaches. 
\texttt{DQN + FE} uses reward data during training to guide exploration, unlike \texttt{FB} which samples a reward function from its internal representation to guide exploration. 
Directly incorporating reward data via a transformer achieves good performance as well, although its worth noting that the transformer takes much more time to train (4x in this implementation) and requires hyper-parameter tuning. 
Lastly, the ablation shows that the quality of the representation matters. 
If the representation is not sufficient to identify $r$, then the RL algorithm cannot distinguish what the current task is and its performance suffers.

Algorithms can also be compared by the landscape of their representations. 
We use cosine similarity to compare representations learned by each algorithm.
An ideal representation should maintain the relationships between functions such that similar functions have similar representations. 
This property would allow the RL algorithm to use similar policies for similar reward functions, whereas if the representations for similar functions are unrelated, the RL algorithm must memorize a separate policy for each reward function.
We graph the cosine similarity of representations in Figure \ref{cos_sim_pacman}.
This graph indicates that the function encoder and the transformer learned a representation that reflects the relationships between reward functions, where similar goals have similar representations. 
In contrast, other approaches do not maintain this relationship.

\section{Conclusion}
We have introduced the function encoder, a general-purpose representation learning algorithm capable of encoding a function as a linear combination of learned, non-linear basis functions. 
The function encoder is a linear operator, meaning the learned representations are generalizable and predictable with respect to previously seen representations. 
Using a function encoder to represent tasks  allows a basic RL algorithm to achieve zero-shot transfer between a set of related tasks.
The representation is simply passed into the policy and value function as an additional input without making major modifications to the RL algorithm.
This method is stable, data efficient, and achieves high asymptotic performance relative to prior approaches while maintaining the simplicity of basic RL algorithms. 

\section{Acknowledgements}

Thank you to my colleagues Cyrus Neary, Sophia Smith, and Dr. Adam Thorpe for helpful discussions. This work was supported in part by NSF 2214939, AFOSR FA9550-19-1-0005, and ARL W911NF-21-1-0009. 

\section{Impact Statement}
This paper presents work whose goal is to advance the field of Machine Learning. There are many potential societal consequences of our work, none which we feel must be specifically highlighted here.

\bibliographystyle{style/icml2024}
\bibliography{sample}

\clearpage
\appendix
\section{Appendix}
\subsection{Proof for Equation 2} \label{proof.eq2}
\[ c_i = \langle f, g_i \rangle = \int_{\mathcal{X}} f(x) g_i(x) dx. \tag{2} \]
This can be shown starting from \eqref{eq:1} (converted to vector notation):
\[ f = \sum_{j=1}^k c_j g_j \tag{1}\]
\[\langle f, g_i \rangle = \langle \sum_{j=1}^k c_j g_j, g_i \rangle\]
\[ \langle f, g_i \rangle = \sum_{j=1}^k c_j  \langle g_j, g_i \rangle\]
\[ \langle f, g_i \rangle = c_i\]
Note the last step is valid due to the orthogonality of the basis functions: 
\[\forall i \neq j \; \langle g_i, g_j \rangle = 0\]
\[ \forall i \; \langle g_i, g_i \rangle = 1\]



\subsection{Implementation Details} \label{implementationdetails}
Code is available here: \newline
{\small \href{https://github.com/tyler-ingebrand/FunctionEncoderRL}{https://github.com/tyler-ingebrand/FunctionEncoderRL}}

\paragraph{Hardware}
All experiments on performed on a 9th generation Intel i9 CPU and a Nvidia Geforce 2060 with 6 GB of memory. 

\paragraph{Transformers}All experiments involving transformers maximize data input size to use all GPU memory.
Furthermore, gradient accumulation is used to improve the input size, which greatly increases training time. 
The maximum input size is $200-400$ examples, depending on the experiment.
Additionally, all transformers are used without positional embeddings. 
In principle, a transformer with a positional embedding can be used as the underlying basis functions for a function encoder, capturing the benefits of both approaches. 

\paragraph{Volume} Equation 4 requires the volume of the input space, $V$, to calculate the coefficients for a given function. However, $V$ may be hard to calculate depending on the input space. For example, the input space may be a unknown subset of $\mathbb{R}^n$ or even an image. In that case, it is unclear how to calculate $V$. To overcome this issue, define the inner product as $\langle f, g_i \rangle = \frac{1}{V}\int_{\mathcal{X}} f(x) g_i(x) dx$. This is still a valid inner product, but $V$ will cancel out in the resulting Monte Carlo integration. This changes the magnitude of the basis functions, since they must either increase or decrease to compensate depending on the value of $V$, but does not require explicit knowledge of $V$ and is thus better in practice.  It also affects the magnitude of the gradients. For this reason, gradient clipping is a useful technique for function encoders. 

\paragraph{Biased Gradients} When training a function encoder, it is important that every gradient update includes gradients from a large number of functions in the function set. If a single function is used to compute loss, the resulting gradients are biased to improve the function encoder's performance with respect to that function but at the cost of decreased performance for other functions. By calculating loss using multiple functions, that bias is reduced. Experimental results show that function encoders (and transformers) trained on one function at a time fail to converge, while using even just five functions at a time will converge. Using more functions per gradient update further improves convergence speed. 

Additionally, each function used to calculate loss ideally should use overlapping data points such that the function can also learn how functions differ for the same input. Without this information, the function encoder may overfit a portion of the input space to a particular function instead of learning how each function fits that space.

\subsection{Inductive Biases}

Since function encoder representations have known properties, they allow the algorithm designer to investigate the inductive biases created by the choice of neural network architecture. These inductive biases can either help or hinder learning, depending on whether or not they align with the problem setting. 

Consider a multi-task environment where only the reward function differs between episodes. There are known useful properties between reward functions and value functions which can be exploited. Suppose the reward function can be written as a linear combination of basis  functions, such as its function encoder representation. Then a linear change in reward leads to a linear change in value for a \textit{given} policy. This implies a good inductive bias for the state-action-value function is
\[Q^\pi(s,a,c_r) := Q^\pi(s,a)^\top c_r, \]
where $Q^\pi(s,a)$ is a vector-valued function where each entry represents the value of the policy with respect to a given reward function basis. 
This inductive bias encapsulates the linear nature of value with respect to a change in reward for a particular policy.
However, the optimal policy is not constant with respect to the reward function, so this inductive bias is poor for the \textit{optimal} policy.

Since a small change in reward function may lead to an abrupt, discontinuous change in the optimal policy, it is necessary that the value function for the optimal policy reflects this. A reasonable architecture is
\[Q^{\pi^*}(s,a,c_r) := Q^{\pi^*}(s,a, c_r)^\top c_r. \]
This inductive bias directly captures the linear relationship of value with respect to reward for the case where the optimal policy does not change with respect to a small change in $c_r$, while also allowing the value function to make abrupt, non-linear changes with respect to $c_r$ if needed. Thus, this architecture has a good inductive bias for state-action-value functions with respect to reward functions because it naturally captures the expected relationship between reward and value. 

\subsection{Piece-Wise Continuous with Sparse Discontinuities} \label{RewTranPol}

The following simple examples illustrate how the relationship between reward or transitions functions and optimal policies may be piece-wise continuous with sparse discontinuities. The environment is a basic grid world where the agent can move left, right, up, or down at some fixed velocity.  

\paragraph{Reward Function}
\begin{figure}           
    \centering
    \resizebox{\linewidth} {!} {%
        \includegraphics{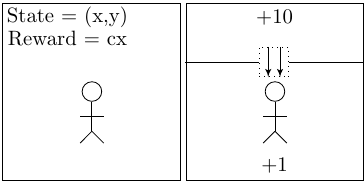}
        }

    \caption{A diagram depicting example MDPs where the reward or transition function varies. This diagram is used to illustrate how policies may vary with respect to changes in their reward (left) or transition functions (right).  
    In the figure on the right, the box with arrows in it indicates a treadmill. If the agent is faster than the treadmill, it can pass over the treadmill. Otherwise, the treadmill would push the agent backwards. }
    \label{envs}
\end{figure}

Consider the environment shown in Figure \ref{envs}. Since the reward is $cx$, if $c>0$, the agent should go right, and if $c<0$, the agent should go left. At $c=0$, there is a discontinuity where the optimal policy changes. For most of the reward function space, a small change in $c$ has no affect on the optimal policy. Hence, the optimal policy is (piece-wise) continuous with respect to a change in reward. However, around $c=0$, a small change in reward leads to a discontinuous change in policy. Thus, there are sparse discontinuities. This example environment illustrates how reward functions can affect optimal policies.

\paragraph{Transition Function} 

Consider the environment shown in Figure \ref{envs}. The treadmill, shown as a rectangle with arrows, has a variable speed $v_{treadmill}$. If the agent's speed $v_{agent}$ exceeds the treadmill's speed, it can move into the upper room and collect $+10$ reward. Otherwise, the treadmill is too strong and pushes the agent back into the room on the bottom, so it can only collect the $+1$ reward. Therefore, the optimal policy of the agent depends on its max speed. If $v_{agent} > v_{treadmill}$, it should go up and collect the $+10$. If $v_{agent} < v_{treadmill}$, it should go down and collect the $+1$. For most treadmill speeds, a small change in speed does not affect the optimal policy. For example, if the agent is much faster than the treadmill, then making the treadmill slightly faster will not affect the optimal policy. However, if the agent is only barely faster than the treadmill, then making the treadmill slightly faster will lead to a discontinuous change in policy. This example illustrates how the optimal policy varies with respect to a change in the transition function. 

\subsection{Orthonormality} \label{app:ortho}
\begin{figure}[h]
    \centering
    \includegraphics[width=\linewidth]{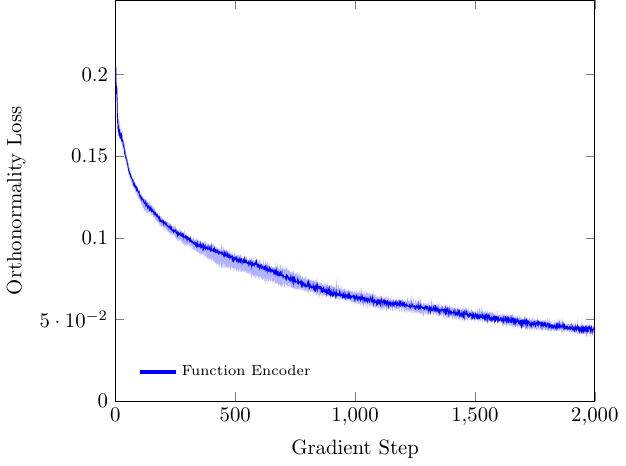}
    \caption{This figure shows the orthonormality of the learned basis functions throughout training on the hidden-parameter system identification task. Y axis indicates a measure of how far the basis functions are from orthonormality. In this example, the orthonormality loss is \textbf{not} used for back-propagation, it is only used to observe how orthonormal the basis functions are. Plot shows min, max, and median over 5 seeds.}
    \label{fig:ablation_o}
\end{figure}

\begin{figure*}[h!]
    \centering
    \begin{subfigure}[t]{0.45\textwidth}
        \includegraphics[width=\linewidth]{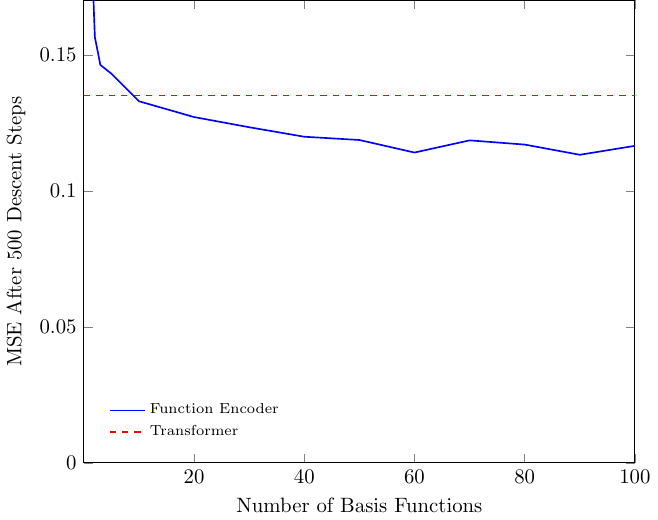}
        \caption{This figure ablates the function encoder's performance on the hidden-parameter system identification task from section 5.1. X-axis indicates the number of basis functions used for training. Y-axis indicates the MSE after 500 descent steps. The red, dashed line indicates the performance of a transformer. }
        \label{fig:ablation_k}
    \end{subfigure}%
    \hfill%
    \begin{subfigure}[t]{0.45\textwidth}
       \includegraphics[width=\linewidth]{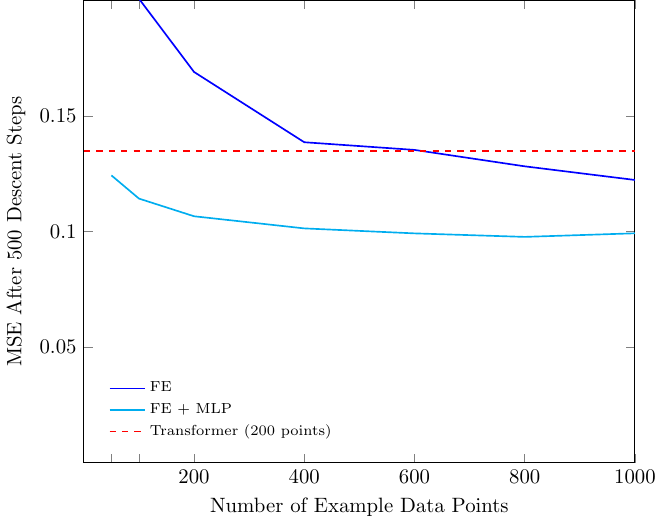}
        \caption{This figure ablates the function encoder's performance on the hidden-parameter system identification task from section 5.1. X-axis indicates the number of example data points. Y-axis indicates the MSE after 500 descent steps. The red, dashed line indicates the performance of a transformer, which is limited to 200 example data points by memory constraints. }
        \label{fig:ablation_n}
    \end{subfigure}

\end{figure*}

We empirically observe that the basis functions converge towards orthonormality during training.  We measure orthonormality via the following term. For each pair of basis functions $g_i,g_j$, the inner product is approximated via Monte Carlo integration. For $i=j$, the inner product would be $1$ if the functions are orthonormal. For $i \neq j$, the inner product would be $0$. The orthonormality loss measures how far the calculated inner products are from these values via mean square error, where the mean is over all pairs of basis functions.
See Figure \ref{fig:ablation_o}. Additionally, we experimented with enforcing orthnormality by using the orthonormality loss as an additional loss component. We found that while it does improve the convergence speed of the orthnormality loss, it does not improve accuracy.

\subsection{Ablations} \label{sec:ablations}
We investigate the effects of the number of basis functions and the number of example data points used to compute the representation. See Figures \ref{fig:ablation_k} and \ref{fig:ablation_n}. 

The ablation shows that the performance of the function encoder is superior to the transformer baseline if more than $10$ basis functions are used. For less than $10$ basis functions, the performance degrades significantly for this particular dataset. Note that the number of basis functions chosen determines the maximum dimensionality of the space that can be learned, and so a larger number of basis functions is better. Furthermore, the number of basis functions needed depends on the dimensionality of the function space in the dataset. We would like to highlight that the function encoder can efficiently use $100$ or more basis functions due to parameter sharing. Therefore, a user may simply choose a large number of basis functions to avoid issues.

We perform an ablation on small data settings, ranging from 50 to 1000 example data points. Results indicate that the function encoder outperforms the transformer if the number of data points is greater than 600. We highlight that the \texttt{FE + MLP} approach is designed for low data settings, and outperforms the transformer even under low data settings.

\end{document}